%%%%%%%%%%%%%%%%%%%%%%% file typeinst.tex %%%%%%%%%%%%%%%%%%%%%%%%%
%
% This is the LaTeX source for the instructions to authors using
% the LaTeX document class 'llncs.cls' for contributions to
% the Lecture Notes in Computer Sciences series.
% http://www.springer.com/lncs       Springer Heidelberg 2006/05/04
%
% It may be used as a template for your own input - copy it
% to a new file with a new name and use it as the basis
% for your article.
%
% NB: the document class 'llncs' has its own and detailed documentation, see
% ftp://ftp.springer.de/data/pubftp/pub/tex/latex/llncs/latex2e/llncsdoc.pdf
%
%%%%%%%%%%%%%%%%%%%%%%%%%%%%%%%%%%%%%%%%%%%%%%%%%%%%%%%%%%%%%%%%%%%

\documentclass[runningheads,a4paper]{llncs}

\usepackage{wrapfig}
\usepackage{amssymb}
\setcounter{tocdepth}{3}
\usepackage{graphicx}
\usepackage{hyperref}

\usepackage{url}
\urldef{\mailsa}\path|{alfred.hofmann, ursula.barth, ingrid.haas, frank.holzwarth,|
\urldef{\mailsb}\path|anna.kramer, leonie.kunz, christine.reiss, nicole.sator,|
\urldef{\mailsc}\path|erika.siebert-cole, peter.strasser, lncs}@springer.com|    

%%%%%%%%%%%%%
%OUR COMMANDS

\usepackage{afterpage}
% use Times
%\usepackage{nips07submit_e,times}
\usepackage{amssymb,amsmath,amstext,float}
\usepackage{graphicx}
\usepackage{color}
\usepackage{multicol}
\usepackage{verbatim}
\usepackage{subfig}
\usepackage{verbatim}
\usepackage{authblk}
\usepackage{nameref}

% For algorithms
\usepackage{algorithm}
\usepackage{algorithmic}
%%
%%%%%%%%%%%%%%%%%% 

\usepackage{array} %%%%%%%%%%%%%%%%%%%%%%%%%%%%%%%%%%%%%%%%%%%%%%%%% DIMA, TAKE THIS!!
\newcolumntype{P}[1]{>{\centering\arraybackslash}p{#1}} %%%%%%%%%%%%%%%%%%%%%%%%%%%%%%%%%%%%%%%%%%%%%%%%% AND THIS!!
\newtheorem{step}{Claim}[section]
%\newtheorem{definition}{Definition}[section]
%\newtheorem{lemma}{Lemma}[section]
%\documentstyle[nips07submit_09,times]{article}

%\newcommand{\qed}{\hbox to .60em{\vrule width .60em height .60em}}

%\newtheorem{proposition}{Proposition}[section]
%\newtheorem{example}{Example}[section]
%\newtheorem{remark}{Remark}[section]
%\newtheorem{theorem}{Theorem}%[section]
%\newsavebox{\fmbox}   \newenvironment{fmpage}[1]     {\begin{lrbox}{\fmbox}\begin{minipage}{#1}}     {\end{minipage}\end{lrbox}\fbox{\usebox{\fmbox}}}

%\renewcommand{\theequation}{\thesection.\arabic{equation}}

%\documentstyle[11pt,amsfonts,fullpage]{amsart}
%\documentstyle[11pt,amsfonts]{amsart}
%\newcommand{\qed}{\hfill $\square$\hspace*{4.5em}}
%\newcommand{\qed}{\hbox{}\hfill $\square$\hspace*{4.5em}\break%
%                    \vskip .5\baselineskip}

%\renewcommand{\baselinestretch}{1.5}
%\newtheorem{equation}{~}[section]

\newcommand{\mv}[1]{\mathbf{#1}}%matrix-vector

\newcommand{\bstep}{\begin{step}}
\newcommand{\estep}{\end{step}}
\newcommand{\blem}{\begin{lemma}}
\newcommand{\elem}{\end{lemma}}
\newcommand{\brem}{\begin{remark}}
\newcommand{\erem}{\end{remark}}
\newcommand{\bthm}{\begin{theorem}}
\newcommand{\ethm}{\end{theorem}}
\newcommand{\beqn}{\begin{equation}}
\newcommand{\eeqn}{\end{equation}}
\newcommand{\eeq}{\end{equation}}
\newcommand{\beq}{\begin{equation}}
\newcommand{\eitem}{\end{itemize}}
\newcommand{\bitem}{\begin{itemize}}
\newcommand{\eenum}{\end{enumerate}}
\newcommand{\benum}{\begin{enumerate}}

%\newenvironment{note}{\begin{quote}\small\sf$\diamondsuit$~}{\end{quote}}

%\def\E{\hbox{$I\!\!E$}}
%\def\F{\hbox{$I\!\!F$}}
%\def\H{\hbox{$I\!\!H$}}
%\def\N{\hbox{$I\!\!N$}}
%\def\P{\hbox{$I\!\!P$}}
%\def\R{\hbox{$I\!\!R$}}
%\def\Z{\hbox{$I\!\!\!Z$}}
%\def\C{\hbox{$C\!\!\!{\vrule height1.5ex width1pt depth-0.12pt}\,\,\,$}}
%\def\G{\hbox{$G\!\!\!{\vrule height1.5ex width1pt depth-0.12pt}\,\,\,$}}

%%%%%%%%%%%%%%%%%5

%\topmargin -0.5in %default is 0.0, i.e. 1 inch from top.  Now it is 0.5
%\textheight 9in
%\oddsidemargin -0.5in %more reasonable margins!  (This is 1.25; can use <0 too)
%\evensidemargin -0.5in %(same as above)
%\leftmargin -1in %more reasonable margins!  (This is 1.25; can use <0 too)
%\rightmargin -0.5in
%\textwidth 6.75in 
%\textheight 9in%(cou

\begin{document}

\mainmatter  % start of an individual contribution

% first the title is needed
\title{Sampling-based Gradient Regularization for Capturing  Long-Term Dependencies in Recurrent Neural Networks}

% a short form should be given in case it is too long for the running head
\titlerunning{Sampling-based Gradient Regularization in RNNs}

% the name(s) of the author(s) follow(s) next
%
% NB: Chinese authors should write their first names(s) in front of
% their surnames. This ensures that the names appear correctly in
% the running heads and the author index.
%
\author{Artem Chernodub
\thanks{a.chernodub@gmail.com}
\and Dimitri Nowicki
\thanks{nowicki@nnteam.org.ua}}
\authorrunning{A.N. Chernodub, D.V. Nowicki}
% (feature abused for this document to repeat the title also on left hand pages)

% the affiliations are given next; don't give your e-mail address
% unless you accept that it will be published
\institute{Institute of MMS of NASU, Center for Cybernetics, 42 Glushkova ave.,  Kiev, Ukraine 03187}
%\url{http://www.springer.com/lncs}}

%
% NB: a more complex sample for affiliations and the mapping to the
% corresponding authors can be found in the file "llncs.dem"
% (search for the string "\mainmatter" where a contribution starts).
% "llncs.dem" accompanies the document class "llncs.cls".
%

\toctitle{Lecture Notes in Computer Science}
\tocauthor{Authors' Instructions}
\maketitle

\bibliographystyle{unsrt} 

\begin{abstract}
Vanishing (and exploding) gradients effect is a common problem for recurrent neural networks which use backpropagation method for calculation of derivatives. We construct an analytical framework to estimate a contribution of each training example to the norm of the long-term components of the target function’s gradient and use it to hold the norm of the gradients in the suitable range. Using this subroutine we can construct mini-batches for the stochastic gradient descent (SGD) training that leads to high performance and accuracy of the trained network even for very complex tasks. To check our framework experimentally we use some special synthetic benchmarks for testing RNNs on ability to capture long-term dependencies. Our network can detect “links” between events in the (temporal) sequence at the range ~100 and longer.

\end{abstract}

\section{Introduction}
Recurrent Neural Networks (RNNs) are known as universal approximators of dynamic systems \cite{Hava-1997}. Since RNNs are able to simulate any open dynamical system, they have a broad spectrum of applications such as time series forecasting \cite{Cardot-2011}, control of plants \cite{Prokhorov-2008}, language modeling \cite{Mikolov-2010}, speech recognition, neural machine translation \cite{Cho-2014} and other domains. 
The easiest way to create an RNN is adding the feedback connections to the hidden layer of multilayer perceptron. This architecture is known as Simple Recurrent Network (SRN). Despite of the simplicity, it has rich dynamical approximation capabilities mentioned above. However, in practice training of SRNs using first-order optimization methods is difficult \cite{Bengio-1994}. The main problem is well-known “vanishing/exploding gradients effect” that prevents capturing of long-term dependencies in data. Vanishing gradients effect is a common problem for recurrent and deep neural networks with sigmoid-like activation functions which uses a backpropagation method for calculation of derivatives. Hochreiter and Schmidhuber designed a set of special synthetic benchmarks for testing RNNs on ability to capture long-term dependencies \cite{Hochreiter-1997}. They showed that ordinary SRNs are very ineffective to learn correlations in sequential data if distance between the target events is more than 10 time steps. 
The solution could be using more advanced second-order optimization algorithms such as Extended Kalman Filter, LBFGS, Hessian-Free optimization \cite{Martens-2011}, but they require much more memory and computational resources for state-of-the-art networks. We also mention such an alternative to temporal neural networks as hierarchical
sequence processing with auto-associative memories \cite{kussul1991multilevel}. The mainstream solution for the gradient control problem  is based on more complex architectures such as LSTM \cite{Hochreiter-1997} or GRU \cite{Cho-2014} networks. However, training the SRN's for catching long-term dependencies is  highly desirable at least for better understanding of underlying processes of the training inside the recurrent and deep neural networks. Also, SRNs are more compact and fast working models of RNNs in comparison with  LSTMs that is very important for implementation to mobile and embedded devices. Recent research shows the ability to train SRNs for long term dependencies up to 100 time steps and more using several new techniques \cite{Martens-2011}, \cite{Pascanu-2012}.  In this paper we propose a new method to perform the gradient regularization by selection of proper samples in dataset.

\section{Backpropagation Mechanism Revisited} 

Consider a SRN that at each time step $k$ receives an external input $\mv{u}(k)$, previous internal state $\mv{z}(k-1)$ and produces output $\mv{y}(k+1)$:

\begin{equation}
  \begin{array}{l}
\mv{a}(k)=\mv{u}(k)\mv{w}_{in} 
+\mv{z}(k-1)\mv{w}_{rec} +\mv{b},\\ \mv{z}(k)=f(\mv{a}(k)),\\
\mv{y}(k+1)=g(\mv{z}(k)\mv{w}_{out} ),
  \end{array}
  \label{eq_srn}
  \end{equation}

where $\mv{w}_{in} $ is a matrix of input weights, $\mv{w}_{rec} $ is matrix of recurrent weights, $\mv{w}_{out} $ is 
matrix of output weights, $\mv{a}(k)$ is known as ``presynaptic activations'', $\mv{z}(k)$ is 
a network's state, $f(\cdot )$ and $g(\cdot )$ are nonlinear activation functions 
for hidden and output layer respectively. In this work we always use ${tanh}$ function 
for hidden layer and optionally ${softmax}$ or $linear$ function depending 
on the target problem (classification or regression) for output layer. 
%deleted figure 1

The dynamic error derivative is a sum of immediate derivatives: $\frac{\partial E}{\partial \mv{w}} =\sum _{n=1}^{h}\frac{\partial E}{\partial \mv{w}(k-n)}$, 
where $n=1,...,h$, where $h$ is BPTT's truncation depth. An intermediate variable  $\mv{\delta} 
\equiv \frac{\partial E}{\partial \mv{a}} $  called a ``local gradients'' or simply ``deltas'' 
is usually introduced for convenience,

\begin{equation}
\delta (k-h)=\delta (k-h+1)\mv{w}_{rec}^{T} diag(f'(\mv{a}(k-h))).
\label{eqd4}
\end{equation}

Equation \eqref{eqd4} may be rewritten using Jacobian matrix $\mv{J}(n)=\frac{\partial \mv{z}(n)}{\partial \mv{z}(n-1)} $:

\begin{equation}
\mv{\delta} (k-h)=\mv{\delta} (k-h+1)\mv{J}(k-h).
\label{eqd5}
\end{equation}

Now we can use an intuitive understanding of exploding/vanishing gradients problem  that was deeply investigated in classic \cite{Bengio-1994} and modern papers \cite{Pascanu-2012}. As it can be seen from \eqref{eqd5},  norm of the backpropagated deltas is strongly dependent on norm of the Jacobians. Moreover, they actually are product of Jacobians: $\mv{\delta} (k-h)=\mv{\delta} (n)\mv{J}(k)\mv{J}(k-1)...\mv{J}(k-h+1)$.
%\begin{equation}
%  \begin{array}{l}
%\mv{\delta} (n-2)=\mv{\delta}(n)\mv{J}(n)\mv{J}(n-1)\\
%\cdots \\
%\mv{\delta} (n-h)=\mv{\delta} (n)\mv{J}(n)\mv{J}(n-1)...\mv{J}(n-h+1)
%  \end{array}
%  \label{eqd7}
%  \end{equation}
The ``older'' deltas are, the more Jacobian matrices were multiplied. If norm of Jacobians 
are more than 1 if the gradients will grow exponentially in most cases. It refers to the RNN's behavior 
where long-term components are more important than short-term ones. Vice versa, if 
norm of Jacobians are less than 1, this leads to vanishing gradients and ``forgetting'' 
the long-term events. In \cite{Pascanu-2012} a universal ``gradient regularization'' approach that forces the gradient norm to stay in a stable range via modification of the training objective function proposed. However, they used a complex regularizer to preserve norm in the relevant direction.

\section{Differentiation of the gradient's norm}

Let $\mv{d}=\{ \mv{u}_{1} ;\mv{t}_{1} ;...;\mv{u}_{N} ,\mv{t}_{N} 
\} $   be a minibatch with   $N_D$ training examples. We do forward and back propagation in the network for this minibatch and we get the difference (correction) vector  $d\mv{w}$. Let\'s check how $d\mv{w}$ influences on gradient vanishing or explosion. Let $ \mv{w}_{rec}^{l} $ be a weight matrix for recurrent layer of the SRN at the current iteration $l$ of weight update. Suppose we have made the back and forward pass, so   $ \mv{w}_{rec}^{l} $ is a correction for the recurrent layer such that $ \mv{w}_{rec}^{l+1} =   \mv{w}_{rec}^{l} + d\mv{w}_{rec}$.

Consider a function $S(\mv{w}_{rec}^{(l)})$ that is equal to squared Euclidian norm of \eqref{eqd4} for iteration $l$:  
\begin{equation}
S(\mv{w}_{rec}^{(l)} )=\frac{1}{2}\left\| \mv{\delta} (k-h,\mv{w}_{rec} )\right\| ^{2}_{2}.
\label{eq320}
\end{equation}

Since $\left\| d\mv{w}_{rec}^{(l)}\right\|^{2}_{2}$ is supposed to be small, we can use Taylor expansion of \eqref{eq320} at the current point of weight matrix space:
	
\begin{equation}
S(\mv{w}_{rec}^{(l+1)} )=S(\mv{w}_{rec}^{(l)} + dS + o(\left\| d\mv{w}_{rec}^{(l)}\right\|^{2}_{2})).
\label{eq321}
\end{equation}

\begin{lemma}Linear term $dS$  in (\ref{eq321}) could be expressed as a scalar product of the auxilary vectors $\mv{g}$ and $d\mv{g}$,

\begin{equation}
dS=(\mv{g},d\mv{g}),
\label{eq322}
\end{equation} 
where
\begin{equation}
  \begin{array}{l}
\\
\mv{g}=\left(\prod _{i=h}^{1}diag(f'(\mv{a}(k-i+1)))\mv{w}_{rec}  \right)\delta (k), \\

d\mv{g}=\sum _{i=1}^{h}\left(\left(\prod _{j=h}^{1}diag\left[f'(\mv{a}(k-j+1))\right]\mv{v} \right)
\mv{\delta} (k)\right),\\
\mv{v}=d\mv{w}_{rec} ,_{} \mathrm{if}  \ i=j;_{} \mv{v}=\mv{w}_{rec} ,_{} if \  i\ne j.
  \end{array}
  \label{eq323}
  \end{equation}
\end{lemma}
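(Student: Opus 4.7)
My approach is to exploit the fact that $S$ is the squared Euclidean norm of a quantity that, once the backpropagation recurrence (\ref{eqd4}) is unrolled, is polynomial in $\mv{w}_{rec}$ of degree $h$; differentiating a squared norm then reduces to an inner product, and the polynomial structure makes the differential easy to read off via the matrix product rule.

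Concretely, I would proceed in three steps. First, setting $\mv{g} := \mv{\delta}(k-h)$, the elementary identity $d(\tfrac12 \|\mv{x}\|_2^2) = (\mv{x}, d\mv{x})$ immediately gives $dS = (\mv{g}, d\mv{g})$ up to the error $o(\|d\mv{w}_{rec}\|_2^2)$ already isolated in (\ref{eq321}); this is the desired form (\ref{eq322}). Second, I would iterate (\ref{eqd4}) (equivalently (\ref{eqd5})) from the terminal time $k$ down to time $k-h$, so that each iteration contributes one factor of the form $\text{diag}(f'(\mv{a}(\cdot)))\,\mv{w}_{rec}$; composing all $h$ factors in the order dictated by the recurrence and applying the result to $\mv{\delta}(k)$ yields the product formula for $\mv{g}$ claimed in (\ref{eq323}).

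Third, to obtain $d\mv{g}$ I would differentiate this product formula with respect to $\mv{w}_{rec}$. Since $\mv{w}_{rec}$ occurs exactly $h$ times in the product (once per factor), the ordinary matrix product rule yields a sum of $h$ terms, in each of which exactly one occurrence of $\mv{w}_{rec}$ is replaced by $d\mv{w}_{rec}$ while the remaining $h-1$ occurrences are retained as $\mv{w}_{rec}$. That is precisely the sum over $i$ in (\ref{eq323}), read with the convention that $\mv{v} = d\mv{w}_{rec}$ when $i=j$ and $\mv{v} = \mv{w}_{rec}$ otherwise.

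The main obstacle, in my view, is bookkeeping rather than substance. One has to (a) fix a row-versus-column convention for $\mv{\delta}$ so that the ordering of the product $\prod_{i=h}^{1}$ in (\ref{eq323}) matches the one generated by unrolling (\ref{eqd4}), and (b) make explicit that the forward-pass quantities $\mv{a}(k-j+1)$ and the terminal delta $\mv{\delta}(k)$ are treated as constants with respect to $d\mv{w}_{rec}$. Strictly, $\mv{a}$ depends on $\mv{w}_{rec}$ via the forward pass, so a full differential would include additional contributions from $d(\text{diag}(f'(\mv{a})))$; the lemma evidently linearizes around the currently stored forward trajectory, which is the convention I would state upfront before invoking the product rule.
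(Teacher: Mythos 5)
Your proposal is correct and follows essentially the same route as the paper's proof: unroll the recurrence \eqref{eqd4}--\eqref{eqd5} into a degree-$h$ product in $\mv{w}_{rec}$, identify $S=\tfrac12\|\mv{g}\|_2^2$, and obtain $d\mv{g}$ by the product rule with exactly one factor $\mv{w}_{rec}$ replaced by $d\mv{w}_{rec}$ in each of the $h$ terms. The bookkeeping points you flag are exactly what the paper does implicitly: it takes $\mv{g}=\mv{\delta}(k-h)^{T}$ and uses $(\mv{AB})^{T}=\mv{B}^{T}\mv{A}^{T}$ to fix the ordering, and it likewise treats the forward-pass activations $\mv{a}(\cdot)$ and the terminal $\mv{\delta}(k)$ as constants with respect to $d\mv{w}_{rec}$.
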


\begin{proof}

Using \eqref{eqd4}, \eqref{eqd5} we get  $\delta ^{(l)} (k-h)$:

\begin{equation}
\mv{\delta} (k-h)=\delta (k)\mv{w}_{rec}^{T} diag(f'(\mv{a}(k-1)))...\mv{w}_{rec}^{T} diag(f'(\mv{a}(k-h+1))).
\label{eq325}
\end{equation}

We introduce a  $\mv{D}_{n}$ notation as follows:

\begin{equation}
\mv{D}_{n} \equiv diag(f'(\mv{a}(n))). 
\label{eq326}
\end{equation}

Now \eqref{eq325} becomes:

\begin{equation}
\delta ^{(l)} (k-h)=\delta ^{(l)} (k)\mv{w}_{rec}^{T (l)} \mv{D}_{k-1} w_{rec}^{T (l)} \mv{D}_{k-2} ...\mv{w}_{rec}^{T (l)} \mv{D}_{k-h} .
\label{eq327}
\end{equation}

Auxillary vector $\mv{g}$ is a transposed \eqref{eq327}:

\begin{equation}
\mv{g}=\delta ^{(l)} (k-h)^{T}.
\label{eq328}
\end{equation}

Since $(\mv{AB})^{T} =\mv{B}^{T}\mv{A}^{T} $, \eqref{eq327} and \eqref{eq328} lead to :
\begin{equation}
\mv{g}=\mv{D}_{k-1}\mv{w}_{rec}^{(l)}\mv{D}_{2}\mv{w}_{rec}^{(l)} ...\mv{D}_{k-h}\mv{w}_{rec}^{(l)}\delta ^{(l)} (k).
\label{eq329}
\end{equation} 

Since $\left\| x\right\| _{2} =\left\| x^{T} \right\| _{2} $, regarding \eqref{eq328} and \eqref{eq329}, the function $S(\mv{w}_{rec}^{(l)})$ \eqref{eq320} has an equivalent form:

\begin{equation}
S(\mv{w}_{rec}^{(l)} )=\frac{1}{2} \left\| \mv{g}\right\| _{2}^{2}.
\label{eq330}
\end{equation} 

Now we get the differential $dS$ of \eqref{eq330}:

\begin{equation}
dS=(\mv{g},d\mv{g}),
\label{eq331}
\end{equation}where the vector $\mv{g}$ in \eqref{eq331} is obtained from \eqref{eq329} that is equivalent \eqref{eq323}. Also, we get vector $d\mv{g}$ in \eqref{eq331} by differentiating vector $\mv{g}$ in \eqref{eq329} as follows:

\begin{equation}
d\mv{g}=\sum _{i=1}^{h}\delta(k)\mv{D}_{k-h}\mv{w}_{rec}^{(l)} ...\mv{D}_{i}d\mv{w}_{rec}^{(l)}...\mv{D}_{k-1}\mv{w}_{rec}^{(l)},
\end{equation}

that is the same as \eqref{eq323} up to usage notation $\mv{D}_{n}$. The lemma is proven.\end{proof}

We have to figure out the direction of change of the (Euclidean) norm of $\delta (k-h)$ since the gradient is propagated for $h$ steps back at time step $k$ where the correction $d\mv{w}_{rec} $ is used.

\begin{theorem}
The condition $dS>0$ is sufficient to increase the norm of $\left\| \delta^{(l+1)} (k-h)\right\| _{2}$ comparing to $\left\| \delta ^{(l)} (k-h)\right\| _{2} $ at the next iteration $l+1$ of weight correction then the correction matrix $d\mv{w}_{rec}$ is used. $dS$ here is defined by \eqref{eq322} and $d\mv{w}_{rec} $ is contained in $dS$, and $\mv{w}_{rec}^{(l+1)} = \mv{w}_{rec}^{(l)}+d\mv{w}_{rec} $. Similarily, $dS < 0$ is a sufficient condition for decrease of $\left\| \delta (k-h, \mv{w}_{rec}^{(l+1)} )\right\| _{2}$.
\end{theorem}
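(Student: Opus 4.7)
The plan is to derive the theorem as an almost immediate corollary of the Lemma together with the Taylor expansion already written down in equation (\ref{eq321}). The key observation is that the quantity $S$ was defined in (\ref{eq320}) to be exactly $\tfrac{1}{2}\|\delta(k-h)\|_2^2$, so changes in $S$ and changes in $\|\delta(k-h)\|_2$ have identical sign. Hence it suffices to show that the sign of $dS$ controls the sign of $S(\mathbf{w}_{rec}^{(l+1)}) - S(\mathbf{w}_{rec}^{(l)})$ in a neighborhood of $\mathbf{w}_{rec}^{(l)}$.

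Concretely, I would proceed as follows. First, rewrite (\ref{eq321}) in the form
\begin{equation}
S(\mathbf{w}_{rec}^{(l+1)}) - S(\mathbf{w}_{rec}^{(l)}) = dS + o(\|d\mathbf{w}_{rec}^{(l)}\|_2^2),
\end{equation}
using the Lemma to identify $dS$ with the scalar product $(\mathbf{g}, d\mathbf{g})$ from (\ref{eq322})--(\ref{eq323}). Second, note that by (\ref{eq320}) and the equivalent form (\ref{eq330}) proved inside the Lemma,
\begin{equation}
S(\mathbf{w}_{rec}^{(l+1)}) - S(\mathbf{w}_{rec}^{(l)}) = \tfrac{1}{2}\left(\|\delta^{(l+1)}(k-h)\|_2^2 - \|\delta^{(l)}(k-h)\|_2^2\right),
\end{equation}
so the quantity $\|\delta^{(l+1)}(k-h)\|_2 - \|\delta^{(l)}(k-h)\|_2$ has the same sign as the left-hand side (the squared-norm map is strictly increasing on $[0,\infty)$). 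Third, if $dS > 0$, then because $dS$ is linear in $d\mathbf{w}_{rec}$ while the remainder is $o(\|d\mathbf{w}_{rec}\|_2^2)$, the linear term dominates for sufficiently small $\|d\mathbf{w}_{rec}\|_2$, forcing the difference to be strictly positive; the symmetric argument handles $dS < 0$.

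The only nontrivial point, and the one I would write out carefully, is the smallness argument used to absorb the higher-order remainder. This is the standard statement that if $\varphi(t) = at + r(t)$ with $a \neq 0$ and $r(t) = o(t^2)$ near zero, then $\mathrm{sign}(\varphi(t)) = \mathrm{sign}(a)$ for all sufficiently small $t > 0$; applied to the one-parameter family $t \mapsto \mathbf{w}_{rec}^{(l)} + t\,d\mathbf{w}_{rec}$ this yields the desired conclusion. I do not expect any real obstacle beyond this; the heavy lifting has already been done in the Lemma, where the differential $dS$ was written explicitly as a bilinear expression in the $\mathbf{D}_n$, $\mathbf{w}_{rec}^{(l)}$, $d\mathbf{w}_{rec}$ and $\delta(k)$, and the equivalence $S = \tfrac{1}{2}\|\mathbf{g}\|_2^2 = \tfrac{1}{2}\|\delta(k-h)\|_2^2$ was established via the identity $\|x\|_2 = \|x^T\|_2$.
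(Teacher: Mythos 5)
Your proposal is correct and follows essentially the same route as the paper: compare $S(\mv{w}_{rec}^{(l)})$ and $S(\mv{w}_{rec}^{(l+1)})$ via the Taylor expansion \eqref{eq321} together with Lemma 1, and read off the sign of the change of $\left\| \delta(k-h)\right\|_2$ from the sign of $dS$. Your explicit monotonicity remark about the squared norm and the smallness argument for absorbing the $o(\left\| d\mv{w}_{rec}\right\|_2^2)$ remainder only make precise what the paper leaves implicit in its assumption that $d\mv{w}_{rec}$ is small.
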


\begin{proof}

Let's compare values $S(\mv{w}_{rec}^{(l)} )$ and $S(\mv{w}_{rec}^{(l+1)} )$ that correspond to current iteration $l$ and next one $l+1$ of the weight update. We use Taylor expansion \eqref{eq321} and the Lemma 1. From \eqref{eq321} follows that the sign of \eqref{eq322} defines a direction of change for the Euclidean norm of $\delta(k-h)$ between the iterations $l$ and $l+1$ and absolute value $|dS|$ defines the magnitude of this change.\end{proof} 

Idea of our sampling-based gradient regularization algorithm is selection of ``proper'' samples of data for training. Using Theorem 1 we can clearly find out an impact of each mini-batch on norm of backpropagated gradients. 

We introduce auxiliary variable called $Q$-factor that measures how much the norm of the gradient is decreased or increased during the backpropagation. For ideal catching of long-term dependencies Q-factor must be close to 0.
\begin{equation}
Q(\mv{\delta}, h)=\log_{10} (\frac{\left\| \mv{\delta} (k)\right\| }{\left\| \mv{\delta} (k-h)\right\| } ).
\label{eq27}
\end{equation}

Here we use the simplest and the most straightforward method: we watch a norm of the gradients; if the norm becomes too  small, we omit mini-batches of data such that decrease this norm. Vice versa, if norm becomes very large, we skip mini-batches increasing this norm even more. Also, note that it is better to skip minibatches with large $|dS|$: they  can cause high "leaps" of the gradient norm and therefore its self-oscillations. 

\begin{algorithm}[tb]
   \caption{Algorithm of sampling-based gradient regularization}
   \label{algmain}
\begin{algorithmic}
   \STATE {\bfseries Input:}  training data $\{ \mv{U},\mv{T}\}$ , $r_0>0$.
   
   \FOR{each minibatch $\mv{u}_i; \mv{d}_i$ with $N_{D}$ vectors}

   \STATE   calculate $dS$ (\ref{eq322}), if $|dS|>0$
		\STATE \textbf{continue}
   \STATE make forward and backward propagation
		\STATE calculate $Q(\mv{\delta}, h)$ (\ref{eq27})
		 \IF{ $Q(\mv{\delta}, h)\in [Q_{min}; Q_{max}]$}
			\STATE use current minibatch for training 
			\ELSE 
			 \IF { $(Q(\mv{\delta},h)<Q_{min}\  \mathbf{and} \ dS>0)\  \mathbf{or}\ (Q(\mv{\delta},h)>Q_{max}\ \mathbf{and} \ dS<0)$}
			\STATE use current minibatch for training 
			\ELSE
			\STATE \textbf{continue}
			\ENDIF 
			\ENDIF

   \ENDFOR
   %\UNTIL{$noChange$ is $true$}
\end{algorithmic}
\end{algorithm}

\section{Experiments}

We follow \cite{Pascanu-2012} and use the following synthetic problems for catching long-term dependencies: ``Adding'', ``Multiplication'', ``Temporal order'', ``Temporal order 3-bit''. Two sets containing 10 SRN with 100 hidden units each were initialized by random values  and saved. Thus, for different training methods initial weights of neural networks were 
the same. ``Safe'' range $\left[Q_{min} ;Q_{max} \right]$ for \eqref{eq27} was set to $\left[-1;1
\right]$.  

We use SGD optimization, training speed $\alpha
=10^{-5} ...10^{-3}$, momentum $\mu =0.9$, size of mini-batch is 10. Train / validation / test datasets contains 20,000 / 1000 / 10,000 samples respectively. After each epoch, network's performance is tested on validation dataset; network that has the best performance on the validation dataset is tested  on the test dataset, this result is recognized as the final result. We trained SRNs during 2000 epochs, each epoch consists 50 iterations, i.e. 100,000 corrections of weights at all.

 Weights  were initialized by small values from Gaussian distribution with zero mean and standard 
deviation $\sigma$. On Fig. \ref{fig35} average norms of gradients as function of  backpropagation depth (before training, further referred as initial gradients) are graphed for different values $\sigma $ for the  ``Temporal order problem''. We see that good initialization of weights is very important because  vanishing/exploding gradients has monotonous flow in most cases because gradients  are propagated through the same matrix of recurrent weights..

\begin{figure*}[htbp]
%\centerline{
\begin{center}
\subfloat[$\sigma$= 0.01]{\includegraphics[width=4cm]{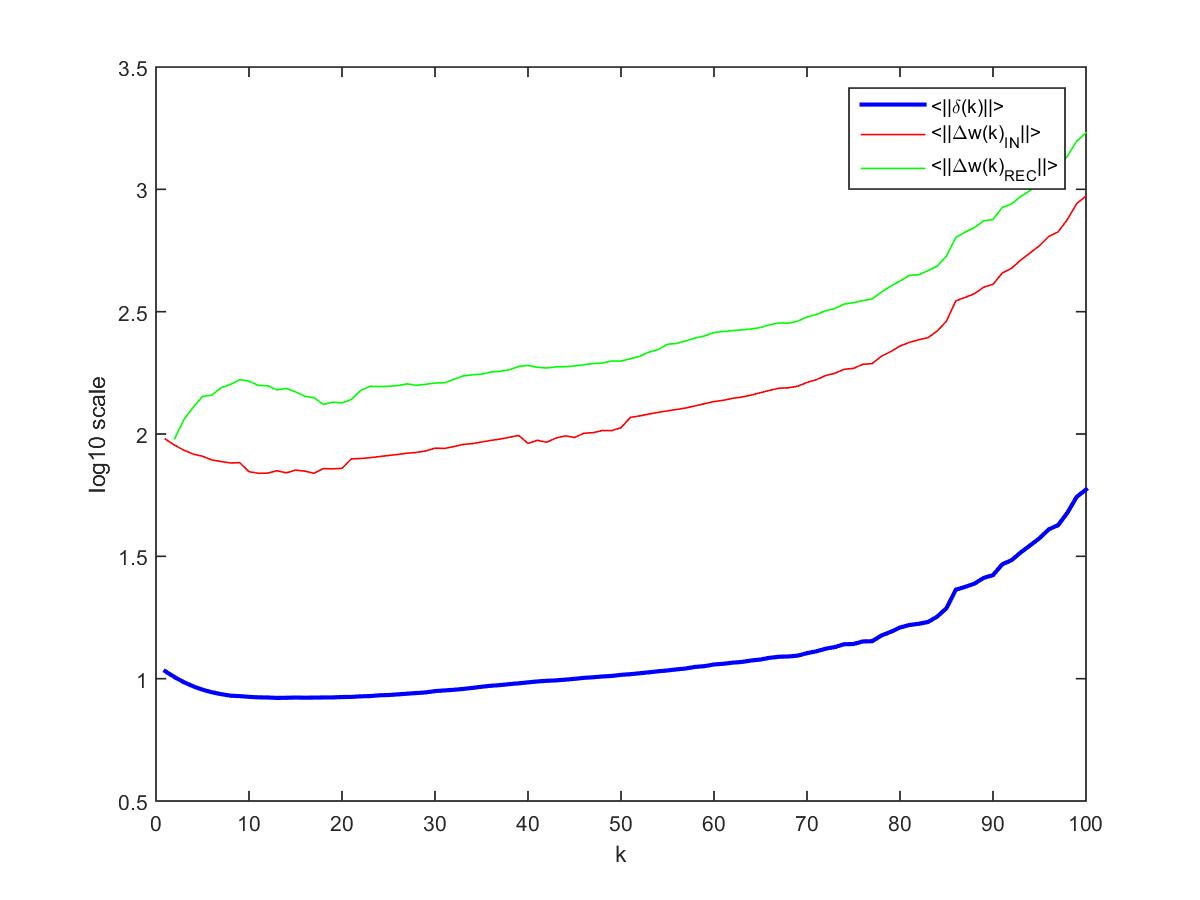}}
\subfloat[$\sigma$= 0.005]{\includegraphics[width=4cm]{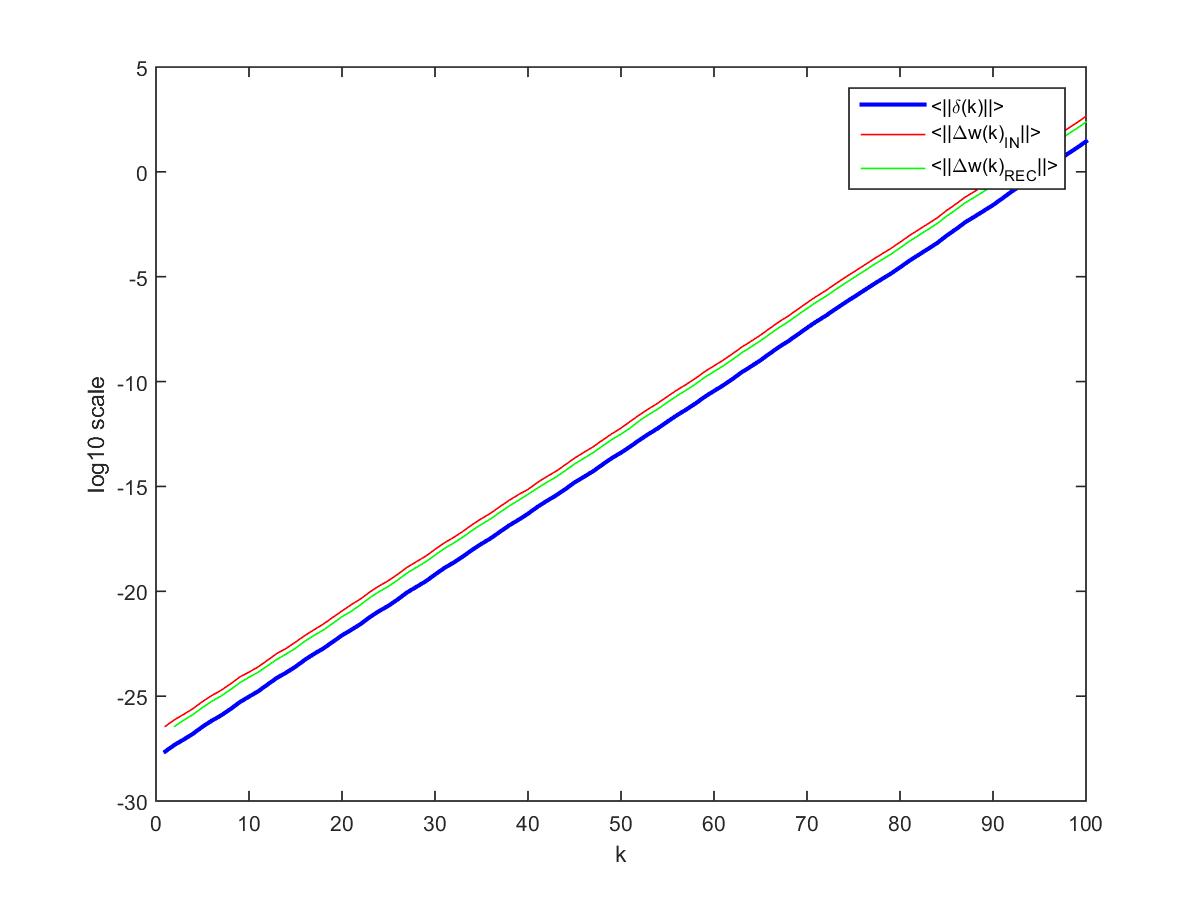}}
\subfloat[$\sigma$= 0.02]{\includegraphics[width=4cm]{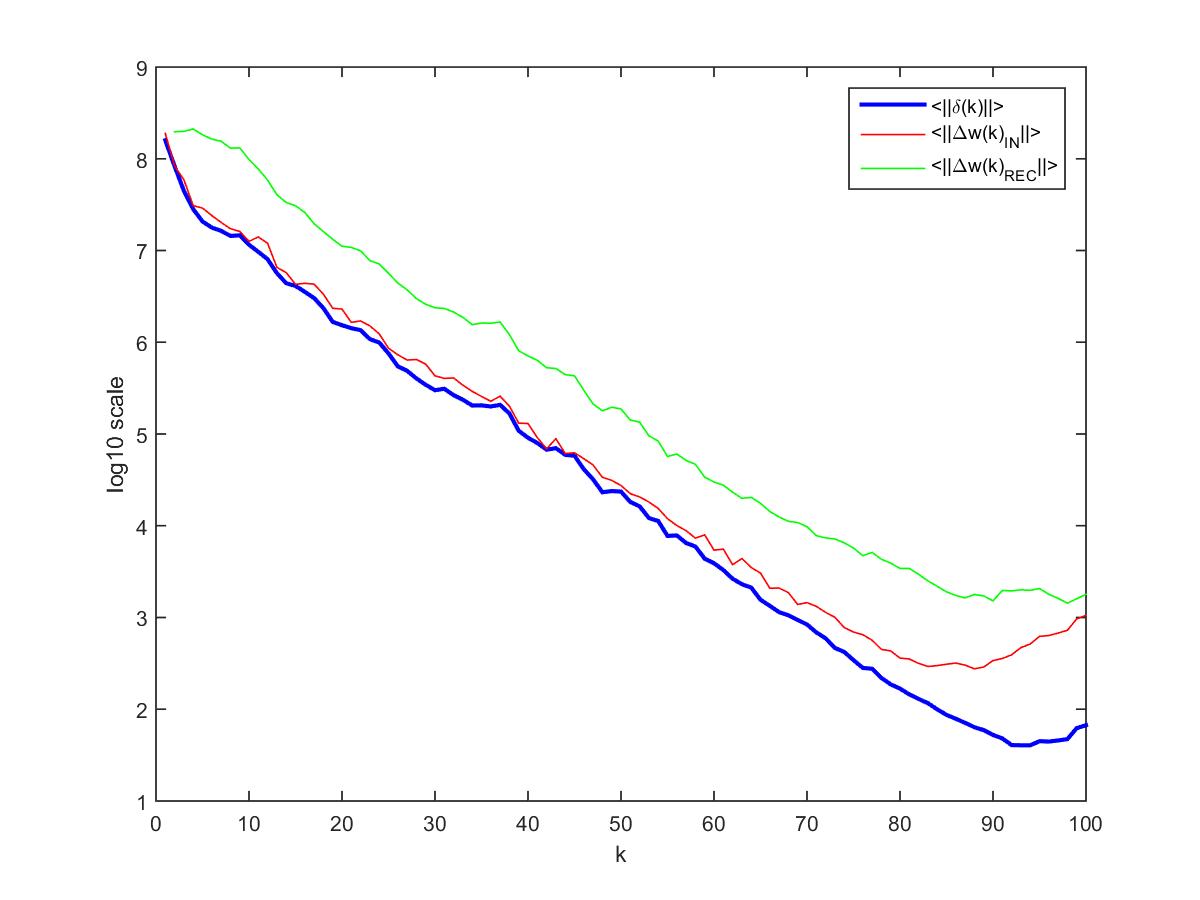}}

\end{center}

\caption{\label{fig35} Average norms of backpropagated (initial) gradients for 
SRNs, horizon BPTT $h=100$.}
\end{figure*}

Each chart at Fig \ref{fig35} contains three curves: average norms of local gradients $\delta 
(k)$ (blue) and average norms of gradients $\Delta \mv{w}(k)_{in} \equiv \frac{
\partial E}{\partial \mv{w}_{in} } $ and $\Delta \mv{w}(k)_{rec} \equiv \frac{\partial E}{
\partial \mv{w}_{rec} } $ (red and green). From the graphs at Fig. 3.5 one can 
ensure on practice that to control the norms $\frac{\partial E}{\partial \mv{w}}$   which 
actually make changes to the weights and are under the main scope of our interest 
it is enough to control the norms of local gradients $\mv{\delta} (k)$ because they are 
highly correlated.

Finally we used $\sigma =0.01$ as in \cite{Pascanu-2012}. However, proper initialization doesn't guarantee successful training. Particular  case of forward and backward dynamics (norms of the backpropagated gradients are depicted on the top, mean and  median activation values) during training of SRN network is shown on 
Fig. \ref{fig3_6}. SRN that is depicted on Fig.\ref{fig3_6} was initialized with $\sigma =0.01$ and initial norms of backpropagated gradients were similar to Fig. \ref{fig35} a). However, after 500 iterations 
we got norm of gradients less than $10^{-7} $ for $h = 100$.  After that almost all the time neural networks had small gradients in the range $10^{-7} ...10^{-8} $ . From the graphs on Fig. \ref{fig3_6}, on the left, we see that area of small  gradients is related to area of saturation for neuron's activations. This is a symptom 
of bad network abilities for successful training and obtaining good generalization 
properties. 
%% \cite{Glorot-2010}. Check Binlio!

\begin{figure}[ht]
%\centerline{
\begin{center}
\subfloat[Grad. reg. OFF]{\includegraphics[height=6cm]{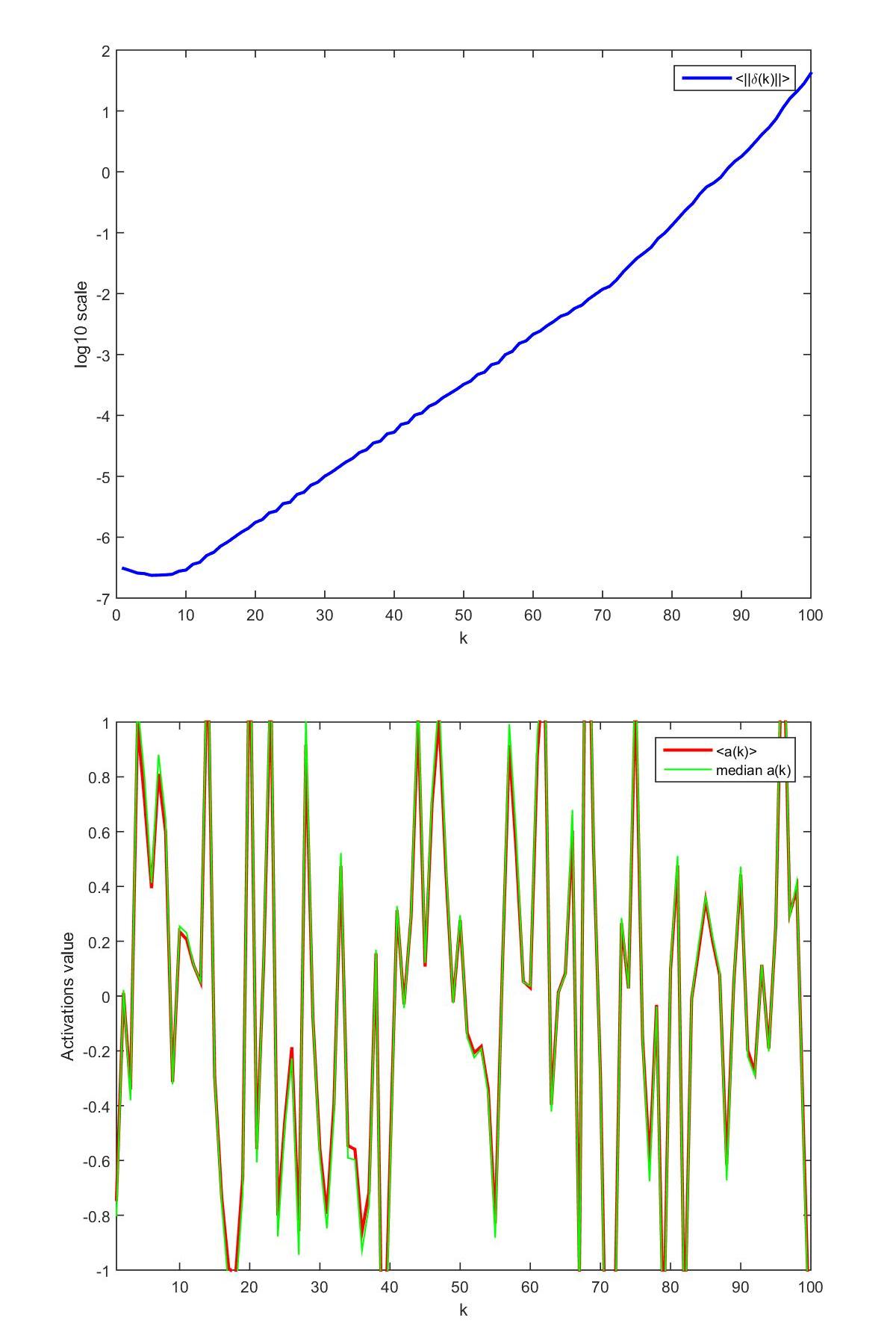}}
\subfloat[Grad. reg. ON]{\includegraphics[height=6cm]{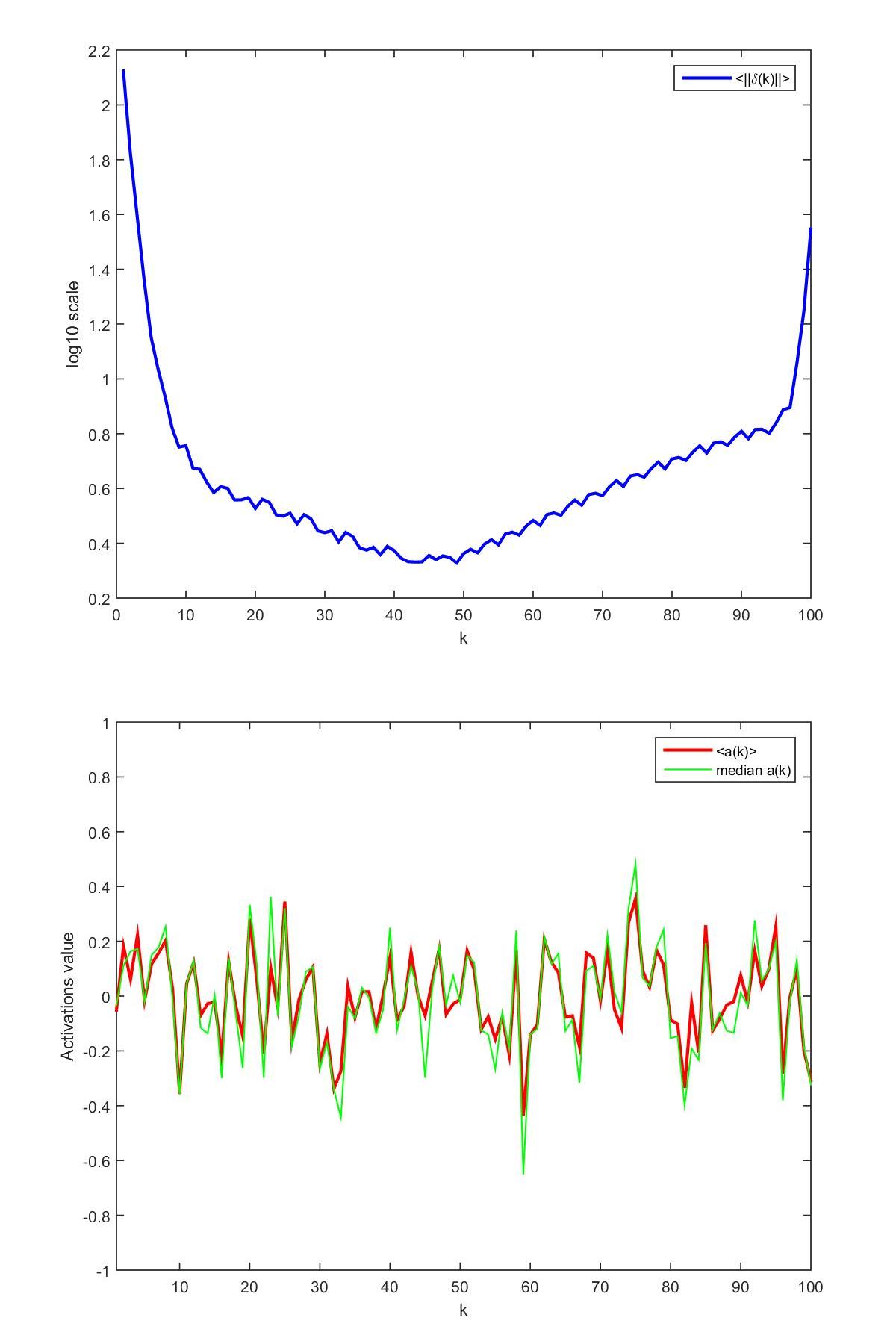}}

\caption{ \label{fig3_6}Evolution of internal dynamics inside the SRN 
during training without gradients regularization (a) and with sampling-based gradients regularization (b). On  both (a) and (b) upper graphs are mean norms 
in time of backpropagated via  BPTT local gradients $\mv{\delta} (k)$; lower ones are mean 
and median values of activations $\mv{a}(k)$. }
\end{center}
\end{figure} 

%%TABLES START

\begin{table*}[h]
  \centering
\caption{\label{tab:tab1} Accuracies of trained SRNs for synthetic problems which have long-term dependencies without gradient regularization (traditional training method)  and with sampling-based gradient regularization (proposed method), for T = 100 and 150.}
\begin{small}
\begin{center}
\begin{tabular}{ |P{2.5cm}| P{1cm} | P{1cm} | P{1cm}| P{1cm} | P{1cm} | P{1.0cm}| P{1.0cm} | P{1.0cm} |P{1.0cm}}
\hline
& \multicolumn{2}{c|}{Adding} & \multicolumn{2}{c|}{Multiplication} & \multicolumn{2}{c|}{Temporal order} & \multicolumn{2}{c|}{Temporal order 3-bit} \\
\cline{2-3}
\cline{4-5}
\cline{6-7}
\cline{8-9}
& best & mean & best & mean & best & mean & best & mean \\
\hline
T=100, grad. reg. OFF& 99\% & 68\% & \textgreater99\% & 72\% & 96\% & 44\% & 99\% & 50\% \\
\hline
T=100, grad. reg. ON  & \textgreater99\% & 96\% & \textgreater99\% & 68\% & \textgreater99\% & 60\% & \textgreater99\% & 62\% \\

\hline
T=150, grad. reg. OFF& 34\% & 11\% & N/A & N/A & 51\% & 30\% & 32\% & 24\% \\
\hline
T=150, grad. reg. ON & 47\% & 13\% & N/A & N/A & 72\% & 42\% & 37\% & 30\% \\
\hline

\end{tabular}
\end{center}
\end{small}
\end{table*}

Using our sampling-based gradients regularization allows to refine the quality of training (Table \ref{tab:tab1}). For lengths $T=100$ and $T=150$ improvement is 10-20\% in average. Samples rejected by the algorithm during the training not necessarily are lost for using in future training process because they may be used when 
network is in ``safe region'' or we may need to change norms of gradients in the 
opposite direction.
\section*{Acknowledgments}

We thank FlyElephant (http://flyelephant.net) and Dmitry Spodarets for computational resources kindly given for our experiments.

\section{Conclusion}

We provided a novel solution of the problem of exploding and vanishing gradient effects, applied to the Simple Recurrent Networks. We analytically derived sufficient conditions on increase and decrease the Euclidean vector norm of backpropagated gradients for SRNs. Using this theorem we designed the algorithm that controls norm of the gradient operating solely with presence of the minbatches in the training sequence. This framework was tested for long-term prediction on a comprehensive set of appropriate benchmarks. Resulting accuracy outperforms best known SRN learning algorithms by 10-20\%. This paradigm could be generalized to deep and multi-layered recurrent networks, that is a subject of our future research. 
\bibliography{grad-bib}

\begin{thebibliography}{10}

\bibitem{Hava-1997}
B.G.~Horne H.T.~Siegelmann.
\newblock Computational capabilities of recurrent narx neural networks.
\newblock {\em IEEE Trans. on Systems, Man, and Cybernetics}, B 27(2):208--215,
  1997.

\bibitem{Cardot-2011}
H.~Cardot R.~Bone.
\newblock {\em Advanced Methods for Time Series Prediction Using Recurrent
  Neural Networks}, chapter Advanced Methods for Time Series Prediction Using
  Recurrent Neural Networks, pages 15--36.
\newblock Intech, Croatia, 2011.

\bibitem{Prokhorov-2008}
D.V. Prokhorov.
\newblock Toyota prius hev neurocontrol and diagnostics.
\newblock {\em Neural Networks}, 21:458--465, 2008.

\bibitem{Mikolov-2010}
T~Mikolov, M~Karafi{\'a}t, L~Burget, J~Cernock{\`y}, and S~Khudanpur.
\newblock Recurrent neural network based language model.
\newblock In {\em Interspeech}, volume~2, page~3, 2010.

\bibitem{Cho-2014}
D.~Bahdanau K.~Cho, B. van~Merrienboer.
\newblock On the properties of neural machine translation: Encoder decoder
  approaches.
\newblock In {\em SSST-8}, Doha, Qatar, 2014.

\bibitem{Bengio-1994}
P.~Frasconi Y.~Bengio, P.~Simard.
\newblock Learning long-term dependencies with gradient descent is difficult.
\newblock {\em IEEE Trans. Neural Networks}, 5(2):157--166, 1994.

\bibitem{Hochreiter-1997}
J.~Schmidhuber S.~Hochreiter, S.
\newblock Long short-term memory.
\newblock {\em Neural Computation}, 9(8):1735--1780, 1997.

\bibitem{Martens-2011}
I.~Sutskever J.~Martens.
\newblock Learning recurrent neural networks with hessian-free optimization.
\newblock In {\em Proc. of the ICML}, 2011.

\bibitem{kussul1991multilevel}
EM~Kussul and DA~Rachkovskij.
\newblock Multilevel assembly neural architecture and processing of sequences.
\newblock {\em Neurocomputers and Attention: Connectionism and neurocomputers},
  2:577--590, 1991.

\bibitem{Pascanu-2012}
Y.~Bengio R.~Pascanu.
\newblock On the difficulty of training recurrent neural networks.
\newblock Technical report, Universite de Montreal, 2012.

\end{thebibliography}
\end{document}